\definecolor{lightcyan}{rgb}{0.88, 1, 1}
\definecolor{RAMBO}{rgb}{0.45, 0.43, 0.84}
\definecolor{ROMI}{rgb}{0.39, 0.54, 0.88}
\definecolor{TempDATA}{HTML}{E25159}
\setlist[itemize]{
  itemsep=5pt,    
  topsep=0pt,   
  parsep=0pt,   
  partopsep=0pt 
}
\newtcolorbox{remarkbox}[1][]{
  enhanced,
  breakable,
  colback=Orange!15,        
  colframe=Orange,             
  colbacktitle=OrangeRed!80!black,         
  coltitle=white,                 
  fonttitle=\bfseries,            
  title=#1,                       
  boxed title style={
    frame empty,
    left=2pt,
    right=2pt,
    top=2pt,
    bottom=2pt,
  },
  rounded corners,                
  boxrule=1pt,                    
  arc=1mm,                        
  before skip=1em,                
  after skip=1em,                 
}
\newcommand{\mathbbm}[1]{\text{\usefont{U}{bbm}{m}{n}#1}}
\theoremstyle{plain}
\newtheorem{theorem}{Theorem}[section]
\newtheorem{proposition}[theorem]{Proposition}
\theoremstyle{definition}
\theoremstyle{remark}
\icmltitlerunning{Temporal Distance-aware Transition Augmentation for Offline Reinforcement Learning}
\begin{document}

\twocolumn[
\icmltitle{Temporal Distance-aware Transition Augmentation for \\ Offline Model-based Reinforcement Learning}




\begin{icmlauthorlist}
\icmlauthor{Dongsu Lee}{sch,yyy}
\icmlauthor{Minhae Kwon}{yyy}
\end{icmlauthorlist}

\icmlaffiliation{sch}{Carnegie Mellon University, Pittsburgh, USA}
\icmlaffiliation{yyy}{Department of Intelligent Semiconductors, Soongsil University, Seoul, South Korea}

\icmlcorrespondingauthor{Dongsu Lee}{movementwater@soongsil.ac.kr}

\icmlkeywords{Reinforcement Learning, Representation Learning, Machine Learning, ICML}

\vskip 0.3in
]



\printAffiliationsAndNotice{This work was partly done at} 

\begin{abstract}
The goal of offline reinforcement learning (RL) is to extract a high-performance policy from the fixed datasets, minimizing performance degradation due to \textit{out-of-distribution} (OOD) samples. Offline model-based RL (MBRL) is a promising approach that ameliorates OOD issues by enriching state-action transitions with augmentations synthesized via a learned dynamics model. Unfortunately, seminal offline MBRL methods often struggle in sparse-reward, long-horizon tasks. In this work, we introduce a novel MBRL framework, dubbed \textbf{Temp}oral \textbf{D}istance-\textbf{A}ware \textbf{T}ransition \textbf{A}ugmentation (\textbf{TempDATA}), that generates augmented transitions in a temporally structured latent space rather than in raw state space. To model long-horizon behavior, {TempDATA} learns a latent abstraction that captures a \textit{temporal distance} from both \textit{trajectory and transition levels} of state space. Our experiments confirm that {TempDATA} outperforms previous offline MBRL methods and achieves matching or surpassing the performance of diffusion-based trajectory augmentation and goal-conditioned RL on the D4RL AntMaze, FrankaKitchen, CALVIN, and pixel-based FrankaKitchen.

\url{https://dongsuleetech.github.io/projects/tempdata/}
\end{abstract}

\section{Introduction}
\label{submission}
RL has long been recognized as a powerful paradigm for sequential decision-making. However, it is hindered in real-world applications by its reliance on trial-and-error interactions. The offline paradigm allows RL to control its limitations by leveraging the offline dataset without real-time interactions, achieving notable successes in several domains~\cite{snell2022offline, tang2022leveraging, lee2024ad4rl, lee2023foresighted, lee2025episodic, eo2023impact}.

Previous off-policy RL often suffers from evaluating OOD actions taken by the learned policy when estimating a state-action value. Although online RL does not need to consider OOD, offline setups cannot consider additional data through online interactions. Therefore, offline RL aims to extract the best possible policy from the fixed offline dataset by considering how to handle OOD. One promising workaround is offline model-free RL (MFRL), which imposes conservative regularization on policies or value functions toward the distribution of state–action pairs in the offline data~\cite{kumar2020conservative, kostrikov2021offline}.

\begin{figure}
    \centering
    \includegraphics[width=\linewidth]{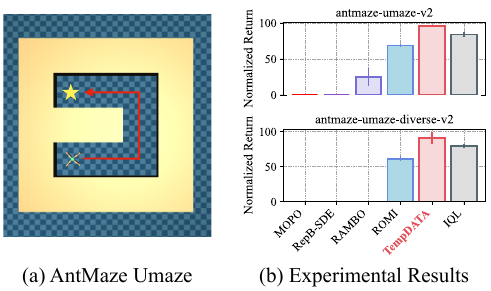}
    \caption{\textbf{Performance comparison overview}. (a) Umaze environment, the most naive level among AntMaze. The 8-DoF ant robot navigates the maze to reach the goal state, marked as a yellow star. (b) Comparison between the proposed solution and previous MBRL on two D4RL benchmark datasets. {\color{TempDATA}TempDATA} (proposed) achieves the best performance in two benchmarks.}
    \label{fig:motiv}
\end{figure}

An alternative option is offline MBRL methods that learn a dynamic model. They alleviate the OOD issue by synthesizing new transitions via planning with the learned dynamics model and a behavior policy~\cite{yu2020mopo, yu2021combo, rigter2022rambo, sun2023model}. These methods have covered OOD samples efficiently, achieving better performance than offline MFRL baselines on dense-reward, short-horizon robotic tasks (\textit{i.e.}, D4RL Gym Locomotion). However, it has been observed that most offline MBRL methods can be problematic in sparse-reward and long-horizon environments, related to goal-achieving tasks. Our empirical evaluation confirms near-zero success rates for MBRL baselines on the canonical AntMaze benchmark, which is a representative goal-achieving environment. Even on the simpler \textit{umaze} variant, most MBRL methods fail as shown in \textit{Figure}~\ref{fig:motiv}.

Unlike the MFRL ({\color{gray}IQL} by~\citeauthor{kostrikov2021offline} in \textit{Figure}~\ref{fig:motiv}), offline MBRL stems from three core issues: over-generalization in out-of-support areas (\textit{e.g.}, obstacles, walls), a biased model-based dataset, or insufficient learning signals about long-horizon behaviors~\cite{wang2021offline, wu2024ocean}. Recent offline MBRL learns an adversarial or a reverse dynamic model to avoid over-generalization, \textit{e.g.}, {\color{RAMBO}RAMBO} by~\citeauthor{rigter2022rambo} and {\color{ROMI}ROMI} by~\citeauthor{wang2021offline} in \textit{Figure}~\ref{fig:motiv}. However, they still struggle to synthesize transitions that incorporate temporal information, which is how distant apart from a goal state, to learn long-horizon behavior. 

This work aims to tackle sparse-reward and long-horizon RL tasks via offline MBRL. To do this, we introduce a novel MBRL framework, dubbed {TempDATA}. {TempDATA} consists of three components: an autoencoder, a latent dynamics model, and an offline policy. The autoencoder has two objectives: 1) embed the temporal distance in terms of both trajectory and transition levels as a representation, and 2) reconstruct original states from latent representations. The latent dynamic model generates additional transitions in a latent space, mitigating overgeneralization. Moreover, offline policy is trained by skill RL or goal-conditioned RL (GCRL) techniques. This design enables more efficient augmentation than operating in high-dimensional state spaces.

In our experiments, we assess {TempDATA}, a suite of goal-oriented benchmarks, covering state-based domains, \textit{e.g.}, AntMaze~\cite{brockman2016openai}, Kitchen~\cite{gupta2019relay}, CALVIN~\cite{mees2022calvin}, and a pixel-based Kitchen variant. We confirm that our solution outperforms previous baselines. Furthermore, the proposed solution achieves better or comparable performance compared to prior goal-conditioned MFRL methods. To our knowledge, {TempDATA} is the first offline MBRL approach to enable efficient transition augmentation for sparse-reward and long-horizon challenges.

\section{Related Works}
\textbf{Offline Reinforcement Learning.} Offline RL has two branches for solving MDP: model-free and model-based solutions. Offline MFRL algorithms regularize the distribution of the policy and $Q$ function to be close to the distribution of the dataset. For example, behavioral-regularized approaches explicitly constrain policy distribution using KL divergence~\cite{wu2019behavior,zhou2021plas}, MMD kernel~\cite{kumar2019stabilizing, zhang2021brac+}, Wasserstein distance~\cite{ma2021conservative}, and BC loss~\cite{fujimoto2021minimalist, tarasov2024revisiting}. Other methods implicitly regularize via weighted regression~\cite{nair2020awac, mitchell2021offline}, conservative $Q$-learning (CQL)~\cite{kumar2020conservative}, uncertainty quantification to the $Q$ estimations~\cite{an2021uncertainty}, and next-action query avoidance~\cite{kostrikov2021offline, xu2023offline}. Different from these methods, the proposed solution augments the dataset by leveraging the learned dynamic model with the offline dataset.

Next, Offline MBRL algorithms involve learning a model of the environment via supervised learning (SL) with maximum log-likelihood estimation of the MDP, and then generating transition data using the learned model~\cite{yu2020mopo, kidambi2020morel, argenson2020model, yu2021combo, lee2021representation, rigter2022rambo, zhang2023discriminator, rafailov2023moto}. This approach uses additional data to optimize a policy and alleviate the conservativeness of offline RL, thereby having the potential for better generalization. Although these prior works have shown promising results, they could not solve the goal-reaching tasks and use pixel-based states. This work aims to construct a dynamic model in a representation space that preserves the temporal distance within a state space of any MDP.

\textbf{Goal-conditioned Reinforcement Learning.} 
Early GCRL formulations are primarily tackled using SL approaches~\cite{kaelbling1993learning, schaul2015universal} with predefined goals and reward functions to guide the agent toward them. They aim to develop efficient exploration methods~\cite{sukhbaatar2017intrinsic} and planning~\cite{nasiriany2019planning} for long-horizon tasks. These supervised methods struggle to generalize because they depend on goal-specific supervision. A recent line of GCRL is towards an unsupervised setting, where the goal is to train agents capable of reaching any goal state in the environment without goal-reaching supervision. This unsupervised method aims to learn value functions~\cite{eysenbach2020c, ghosh2023reinforcement} or representations~\cite{wang2020critic, zheng2023contrastive}, which can discover goals and corresponding goal-conditioned policies. Several works extract these goal-conditioned networks using skill discovery~\cite{ajay2020opal}, hierarchical planning~\cite{pertsch2020long, park2024hiql}, contrastive learning~\cite{eysenbach2022contrastive}, and distance mapping~\cite{myers2024learning, wang2023optimal}. 

This work builds a state abstraction that embeds temporal distance with any goal state or next state. Afterward, we optimize a policy by leveraging the representation network and dynamic model in a latent space.

\textbf{Learning State Space Abstractions.}
Learning representations in RL, referred to as abstraction, aims to map the Markov state space into a latent space, which helps to understand the state space by capturing the essential features within the MDP. More precisely, it can compress higher-dimensional space (\textit{e.g.}, pixel-based state) as a smaller one~\cite{yarats2021image, park2024hiql, fujimoto2024sale} and extract temporal difference information between two states within any MDP~\cite{pong2018temporal, wang2020critic}. Objectives span successor features~\cite{dayan1993improving, barreto2017successor, mazoure2023contrastive}, contrastive losses~\cite{laskin2020curl, eysenbach2022contrastive, zheng2024texttt}, metric learning~\cite{liu2023metric}, dynamics modeling~\cite{seo2022reinforcement}, bisimulation~\cite{hansen2022bisimulation}, and distance learning~\cite{hejna2023distance}. 

Particularly, distance learning proves effective for GCRL. Understanding temporal differences provides an intuitive measurement of how transitions between states are difficult. Temporal distance becomes a better mathematical tool than Euclidean distance for tackling real-world problems involving various objects, structural obstacles, and image-based information. Existing studies map distance into Lipschitz~\cite{lecarpentier2021lipschitz}, metric~\cite{reichlin2024goal}, quasimetric~\cite{wang2023optimal}, Hilbert~\cite{park2024foundation}, and Riemannian spaces~\cite{tennenholtz2022uncertainty}. Including triangular inequality in these spaces enhances their utility in efficiently reaching goal states.

This work aims to learn a geometric autoencoder that extracts temporal distance from a raw state and recovers it from a latent state. We design its objectives to reconstruct the raw state from both the micro- and macroscopic levels.

\section{Preliminaries}
\label{sec:pre}
\textbf{Markov Decision Process (MDP).} 
An RL problem can be formulated using an MDP, which is defined as a tuple $\mathcal M = \langle \mathcal S, \mathcal A, P, \rho_0, r, \gamma \rangle$. This includes a state $s \in \mathcal S$, an action $a_t \in \mathcal A$, a state transition probabilities $P: \mathcal S \times \mathcal A \rightarrow \Delta(\mathcal S)$, an initial state probability $\rho_0: \Delta(\mathcal S)$, a reward function $r: \mathcal S \rightarrow \mathbb{R}$, and a temporal discounted factor $\gamma$.\footnote{$\Delta(\cdot)$ represents the probability simplex.} This work casts the MDP as a discrete-time, infinite-horizon, and deterministic environment, where state transition probability maps a state-action pair to the next state~\cite{dekel2013better, ma2022vip}. 

\textbf{Offline RL.} An offline paradigm aims to extract a value function and policy by leveraging a fixed dataset $\mathcal D$ collected by a mixture of several behavioral policies. Each behavioral policy $\pi_\beta$ can be optimal, suboptimal, and random for a specific task. An offline dataset $\mathcal D$ includes a set of trajectories $\tau = (s_0, a_0, s_1, \cdots, s_H)$, where $H$ is a time horizon of an episode. Each trajectory sample is sampled from trajectory distribution $p^{\pi_{\beta}}(\tau) = \rho_0(s_0)\prod_{t=0}^{H-1} \pi(a_t|s_t)P(s_{t+1}|s_t,a_t)$. The main objective of an RL is to learn a policy $\pi: \mathcal{S}\rightarrow \Delta(\mathcal A)$ that maximizes the cumulative discounted reward $\mathbb{E}[\sum_{t=0}^H \gamma^t r(s_t, a_t)]$. We can extract such a policy using a value-based approach (\textit{i.e.}, $Q$-learning), which approximates a state-action value function $Q(s, a)$ (or state-action value function $V(s)$) with a Bellman optimality operator $\mathcal{B}$ as follows.
$$\mathcal{B}Q(s,a) = \mathbb{E}_{s^\prime \sim P(s^\prime|s,a)}[r + \gamma \arg\max_{a^\prime \sim \pi(s^\prime)}Q(s^\prime, a^\prime)] $$
In practice, value function $Q(s, a)$ (or $V(s)$) is parameterized using a neural network. Such a parameter $\theta$ is optimized by minimizing the temporal-difference error as follows:
$$\mathcal{L}(\theta) = \mathbb{E}_{(s,a,r,s^\prime) \sim \mathcal D}\big[Q_{\phi} - \mathcal{B} Q_{\bar{\phi}}\big],$$
where $\bar{\phi}$ means a target network parameter to stabilize the learning process~\cite{mnih2015human}. The target network parameter is updated by the Polyak averaging method~\cite{polyak1992acceleration}.

\textbf{Offline MBRL.} Offline RL extracts a value function and policy by leveraging a fixed dataset. The model-based solution includes approximating a state transition function to make the most of the fixed dataset. More precisely, this approach builds the learned MDP $\widehat{\mathcal M} = \langle \mathcal S, \mathcal A, \widehat{P}, \rho_0, r, \gamma \rangle$, which $\widehat{P}$ is a learned state transition function that trained using maximum log-likelihood estimation as follows~\cite{yu2020mopo, kidambi2020morel}.
$$\mathcal{L} (\widehat{P}) = \mathbb{E}_{(s,a,s^\prime) \sim \mathcal D} \big[-\log \widehat{P}(s^\prime|s,a) \big]$$
Once a model has been learned, MBRL rollouts a transition by leveraging $\widehat{P}(s^\prime|s, a)$ with any state $s \in \mathcal D$. Augmented transitions are stored in a separate replay buffer $\widehat{\mathcal D}$. Finally, offline MBRL extracts a policy and value function using data sampled from $\mathcal{D} \cup \widehat{\mathcal D}$.

\textbf{Problem Setting.}
This work focuses on goal-reaching tasks. We consider an offline setting with a fixed unlabeled (reward-free) trajectory dataset. We handle such problems using goal state information $s_{\mathrm{goal}} \in \mathcal G$, where goal space $\mathcal G$ is the same as state space $\mathcal S$. The main objective is to reach the goal state $s_{\mathrm{goal}}$, thus reward function can be defined as goal-conditioned reward function $r_g(s) = \mathbbm{1}(s=s_\mathrm{goal})$~\cite{nasiriany2019planning, park2024hiql}.

We aim to augment a state transition dataset useful for achieving a given goal state $g$. Existing MBRL algorithms sometimes synthesize state transitions on inaccessible regions beyond obstacles or cul-de-sacs. To effectively address this, we will show constructing a temporal distance-aware representation $z \in \mathcal Z$ of the state $s \in \mathcal S$ and then augmenting transitions within the representation space $\mathcal Z$.

\begin{figure}[t]
    \centering
    \includegraphics[width=\columnwidth]{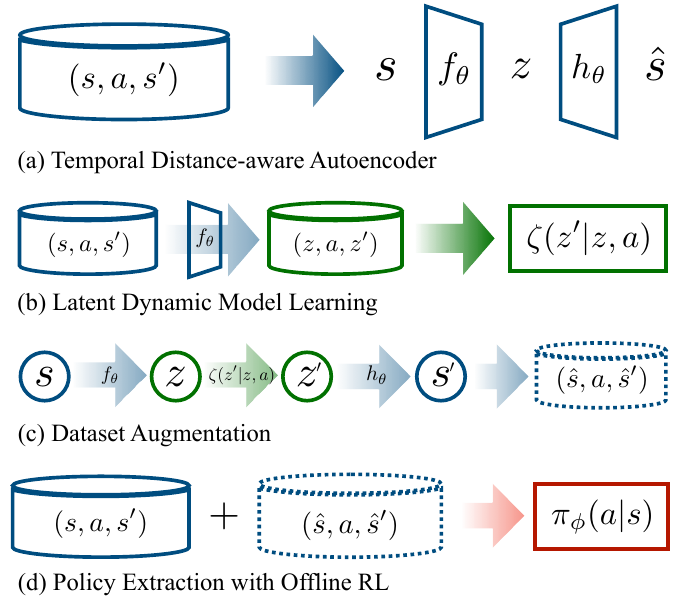}
    \caption{\textbf{Illustration for the proposed MBRL framework.} (a) Train autoencoder using offline dataset. (b) Train a dynamic model using a trained encoder and offline dataset. (c) Generate transition dataset using autoencoder and offline dataset. This happens in a representation space. (d) Extract a policy from offline and generated datasets using offline RL algorithm. Processes (c) and (d) are performed iteratively together.}
    \label{fig:overv}
\end{figure}

\section{TempDATA: Temporal Distance-aware Transition Augmentation}
This section introduces {{TempDATA}}, our offline model-based scheme that augments new transitions, which help search the pathway to reaching goals. We begin by learning a geometric autoencoder that captures the temporal distance between states of an MDP. Next, we train a transition model in this latent space, allowing rollouts that respect the learned geometry. Finally, we augment the offline dataset with synthetic transitions and train an offline policy engineered for downstream goal-reaching tasks. A graphical overview of {{TempDATA}} appears in \textit{Figure}~\ref{fig:overv}.

\subsection{Temporal Distance-aware Representation}
\label{sec:TempAE}
We would like to learn a temporal distance-aware representation encoder $f: \mathcal S \rightarrow \mathcal Z$ from the offline dataset $\mathcal D$ which we can later use as a state abstraction $z$ for building transition models and policies to solve downstream goal tasks. Such state abstraction could provide temporal distance to reach the goal beyond spatial distance $||s - s_{\mathrm{goal}}||$. Intuitively, the closeness of the Euclidean space does not simply translate into the temporal distance (\textit{Figure}~\ref{fig:repr}). Next, we need a decoder $h: \mathcal Z \rightarrow \mathcal S$ to recover the augmented transition from the representation space to the state space. 

\begin{figure}[t]
    \centering
    \includegraphics[width=\columnwidth]{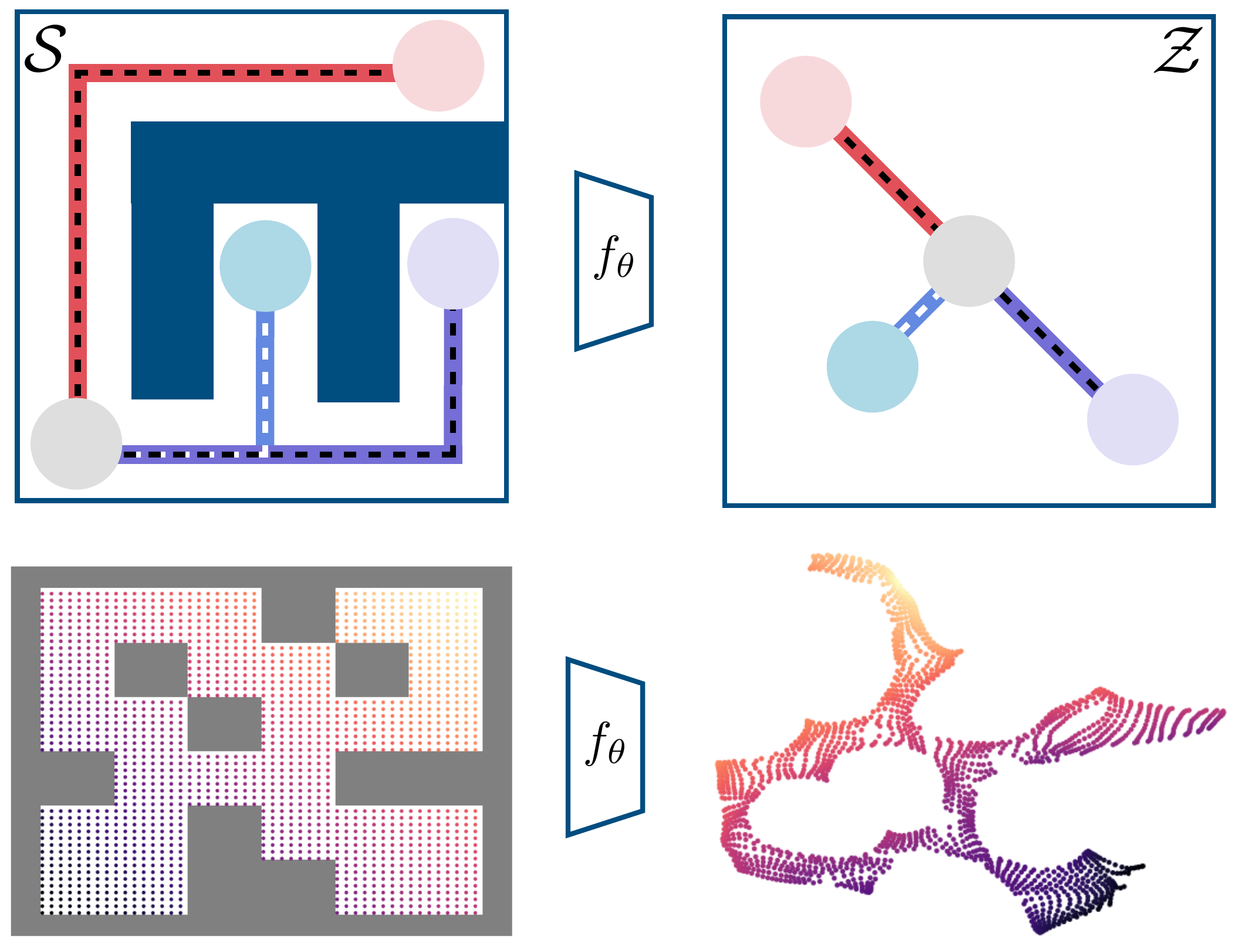}
    \caption{\textbf{Intuition of state abstraction.} (\textit{Top}) Temporal-aware autoencoder, our main idea, maps into state space into a representation that preserves temporal distance information. (\textit{Down}) An empirical result shows that our encoder can map raw state space as latent state space in {antmaze-medium} environments. Here, we consider an ant position state of raw state space as fixed and use t-SNE to plot latent state space visually. }
    \label{fig:repr}
\end{figure}

To achieve it, we train the autoencoder by leveraging reconstruction loss~\cite{bank2023autoencoders}. 
$$\mathcal{L}_{rec} = \arg\min_\theta \mathbb{E}_{s \sim \mathcal D}\Big[|| s - h \circ f(s;\theta)||\Big]$$ 
Subsequently, we consider geometrical constraint $\mathcal R(\theta)$, motivated by~\cite{nazari2023geometric}, where $\theta$ is a parameter of the autoencoder. Through such a constraint, the desiderata we want to achieve are as follows.
\begin{enumerate}
    \item (\textit{Trajectory-level}) Macroscopically, a representation space embeds the temporal distance information of the shortest path between a state and a goal.
    \item (\textit{Transition-level}) Microscopically, a temporal distance-aware representation between a state $s$ and a next state $s'$ is less than a pre-defiend distance $d_0$. 

\end{enumerate}

The geometrical constraint $\mathcal R(\theta)$ for an autoencoder $\theta$ in goal-conditioned RL tasks starts from following \textbf{Proposition}~\ref{lem: equi}~\cite{wang2022learning, wang2023optimal}.
\begin{proposition}[Value-metric Equivalence]
\label{lem: equi}
Generally, an optimal goal-conditioned value function for a state $s$ and goal $s_{\mathrm{goal}}$ is same with an optimal temporal distance as follows:
\begin{equation}
    V^*(s,s_{\mathrm{goal}}) = - d^*\big(f(s;\theta), f(s_{\mathrm{goal}};\theta)\big),
\end{equation}
where $f(s;\theta)$ is a representation encoder that captures the temporal distance of an MDP on state space $\mathcal S$.
\end{proposition}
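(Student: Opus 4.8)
The plan is to establish the identity $V^*(s,s_{\mathrm{goal}}) = -d^*\!\big(f(s;\theta), f(s_{\mathrm{goal}};\theta)\big)$ by showing that the two sides satisfy the same fixed-point recursion and share the same boundary condition, and then invoking uniqueness of the fixed point. First I would unpack the goal-conditioned reward $r_g(s) = \mathbbm{1}(s = s_{\mathrm{goal}})$ together with the deterministic, infinite-horizon assumption on the MDP. Under this setup the optimal goal-conditioned value function can be written in closed form: starting from $s$, an optimal policy reaches $s_{\mathrm{goal}}$ in the fewest number of steps, say $N^*(s,s_{\mathrm{goal}})$, and collects the sparse reward thereafter, so that $V^*(s,s_{\mathrm{goal}})$ is a strictly monotone function of $N^*$. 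With the discount-free (or appropriately normalized) convention used for temporal distance in this line of work~\cite{wang2022learning, wang2023optimal}, this reduces to $V^*(s,s_{\mathrm{goal}}) = -N^*(s,s_{\mathrm{goal}})$, i.e.\ the negative of the minimal transition count.

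Next I would identify $N^*(s,s_{\mathrm{goal}})$ with a bona fide distance on $\mathcal S$. The key structural fact is that $N^*$ satisfies the triangle inequality — any path from $s$ to $s_{\mathrm{goal}}$ through an intermediate $s''$ has length at least $N^*(s,s_{\mathrm{goal}})$ — and that $N^*(s,s) = 0$, while in the deterministic setting the one-step reachability structure makes $N^*$ finite and well-behaved on the connected component of interest. This is exactly the quasimetric (or metric, under symmetry assumptions) structure that the temporal distance $d^*$ is defined to capture. The role of the encoder $f$ is then to provide an isometric embedding: $d^*$ is by construction the learned distance on $\mathcal Z$ such that $d^*(f(s;\theta), f(s_{\mathrm{goal}};\theta))$ recovers the temporal distance on $\mathcal S$, so substituting this into $V^*(s,s_{\mathrm{goal}}) = -N^*(s,s_{\mathrm{goal}})$ yields the claimed equation.

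The main obstacle — and the place where I would be most careful — is the status of $f$: the statement as written reads as an exact equality, but in general it only holds when $f$ is the \emph{optimal} encoder, i.e.\ one that realizes a perfect (quasi)metric embedding of the temporal distance. I would therefore make explicit that $d^*$ and $f$ are defined jointly so that $d^*\big(f(\cdot;\theta), f(\cdot;\theta)\big)$ equals the temporal distance on $\mathcal S$, and treat the proposition as a characterization of that optimum (this is the sense in which it appears in~\cite{wang2022learning, wang2023optimal}). A secondary subtlety is the asymmetry of $N^*$ in a general deterministic MDP: strictly speaking one needs a quasimetric $d^*$ rather than a metric, and the proof should either assume reversibility of the dynamics (so the shortest-path distance is symmetric) or phrase everything in quasimetric terms; I would flag which convention the paper adopts and keep the argument consistent with it. Modulo these definitional points, the remaining steps — verifying the Bellman recursion for the sparse-reward value function, checking the triangle inequality and zero-diagonal for $N^*$, and substituting the isometry — are routine.
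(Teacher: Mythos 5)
Your proposal is correct and follows the standard shortest-path/quasimetric argument from the works the proposition cites; the paper itself does not really prove this proposition --- it offers only the one-line observation that $V^*$ measures time-to-goal, the identity $V^* = \max_\pi V^\pi = \max_a Q^*$, and a deferral to \cite{wang2022learning, wang2023optimal} --- so your writeup is strictly more explicit than the paper's. Your two caveats (that the equality is really a characterization of the jointly optimal encoder/distance pair realizing an isometric embedding, and that the temporal distance is in general only a quasimetric) are precisely the points the paper glosses over. The one remaining loose end is the discount factor: with $\gamma < 1$ and the $r_g(s)-1$ reward convention the optimal value is $-(1-\gamma^{N^*})/(1-\gamma)$, a strictly monotone transform of the step count $N^*$ rather than $-N^*$ itself, so the clean identity $V^* = -d^*$ requires either $\gamma \to 1$ or defining the temporal distance as the discounted cost-to-go; you acknowledge this in passing, but it deserves the same explicit ``convention'' flag you give the symmetry issue.
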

The goal-conditioned value function $V(s,s_{\mathrm{goal}})$ quantifies how quickly a given policy can reach the goal $s_{\mathrm{goal}}$. Therefore, the optimal goal-conditioned value function equals the $Q$-function with an optimal policy. 
    \begin{align}
    V^*(s,s_{\mathrm{goal}}) & \triangleq \max_\pi V^\pi(s, s_{\mathrm{goal}}) \nonumber \\
    & = \max_{\substack{a \in \mathcal A}} Q^*(s,a, s_\mathrm{goal})
\end{align}
Consequently, we regularize the autoencoder such that $z$ preserves a temporal distance $d(f, s, s_{\mathrm{goal}}, \theta)$ of state space. This is achieved by leveraging the temporal differences with the Bellman optimality operator, similar to learning a value function. Additionally, we adopt $r_g(s) - 1~({\mathrm{\textit{i.e.}}}, \mathbbm{1}(s\neq s_{\mathrm{goal}}))$, instead vanilla goal-conditioned reward function~\cite{kostrikov2021offline, tarasov2024corl, shin2023one, lee2024incremental}.\footnote{General methods use transitions by subtracting $1$ from $r_g(s)$.} Comprehensively, the Bellman optimality target for building temporal distance-aware representation is defined as $\mathcal Bd = \mathbb{E}\big[(r_g(s) - 1) + \max_\theta\gamma d\big(f(s';\theta), f(s_{\mathrm{goal}};\theta)\big)\big].$ 
In practice, we use expectile regression~\cite{newey1987asymmetric} to implement Bellman optimality operator as follows~\cite{kostrikov2021offline}:
\begin{equation}
    \mathcal{L}_{traj} = \mathbb{E}_{\substack{(s, s')\sim \mathcal{D}\\ s_{\mathrm{goal}}\sim p_{\mathrm{goal}}}}\bigg [L_2^\tau\Big(\mathcal{B}d - d\big(f(s;\theta), f(s_{\mathrm{goal}};\theta)\big)\Big)\bigg], \nonumber
\end{equation}
where $L_2^\tau(x) = |\tau - \mathbbm{1}(x < 0)|x^2$. Thanks to this formulation, we can extract a representation capturing the shortest path from $s$ to $s_{\mathrm{goal}}$ instead estimate an averaging distance distribution from the probabilistic transitions.

\begin{theorem}
    \label{theorem:tau}
    If $\tau = 1$, $\tau$-th representation function-based temporal distance $d\big(f, s, s_{\mathrm{goal}}, \theta)$, trained by $\mathcal{L}_{traj}$, is equal to a value function with optimal policy for any $s$ as follows:
    \begin{align}
        \lim_{\tau = 1}d_\tau\big(f(s;\theta), f(s_{\mathrm{goal}};\theta)\big) &= -\max_\pi V^\pi(s, s_{\mathrm{goal}}) \\
        &= \mathrm{shortest}~\mathrm{path}~\mathrm{from}~s~\mathrm{to}~ s_{\mathrm{goal}} \nonumber
    \end{align}
\end{theorem}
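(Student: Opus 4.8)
The plan is to chain three observations. First, I would show that taking the expectile parameter to its extreme value, $\tau\to 1$, makes the objective $\mathcal{L}_{traj}$ regress $d\big(f(s;\theta),f(s_{\mathrm{goal}};\theta)\big)$ onto the \emph{in-sample maximum} of its Bellman target rather than its conditional mean; this is exactly the mechanism of~\cite{kostrikov2021offline}, where $L_2^{\tau}$ with $\tau\to1$ recovers a $\max$ over dataset actions without ever querying out-of-support ones. Concretely, for a random variable $Y$ the minimizer of $\mathbb{E}[L_2^{\tau}(Y-m)]$ is nondecreasing in $\tau$ and tends to the essential supremum of $Y$; since $\mathcal{L}_{traj}$ is an expectation of this per-sample loss, its minimizer over a sufficiently expressive encoder class is attained pointwise, so for each pair $(s,s_{\mathrm{goal}})$ the learned $d$ equals the $\tau$-expectile of $\mathcal{B}d$ conditioned on $(s,s_{\mathrm{goal}})$, whose only remaining randomness is the successor $s'$ observed in $\mathcal D$.

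Second, I would identify the fixed point of the resulting backup. Substituting the in-sample optimality selection and $r_g(s)-1=-\mathbbm{1}(s\neq s_{\mathrm{goal}})$ into $\mathcal{B}d$, and using the absorbing-goal boundary condition, one obtains a deterministic shortest-path recursion: the distance at $s$ equals one step plus the ($\gamma$-discounted) distance of the successor that lies on the shortest path to $s_{\mathrm{goal}}$ within the transition graph induced by $\mathcal D$. On the finite set of states appearing in $\mathcal D$ this backup is a $\gamma$-contraction in $\ell_\infty$, hence has a unique fixed point, which by unrolling from the goal equals $\sum_{t=0}^{T^*(s)-1}\gamma^t=\tfrac{1-\gamma^{T^*(s)}}{1-\gamma}$, with $T^*(s)$ the length of the shortest path from $s$ to $s_{\mathrm{goal}}$ (equal to the true temporal distance under a standard data-coverage assumption). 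Since $x\mapsto\tfrac{1-\gamma^x}{1-\gamma}$ is strictly increasing and tends to $x$ as $\gamma\to1$, this is an order-isomorphic encoding of---and, in the undiscounted limit, literally---the shortest-path length.

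Third, I would match this quantity to $-\max_\pi V^\pi$. With the same reward $r_g(s)-1$, for any policy $\pi$ we have $V^\pi(s,s_{\mathrm{goal}})=-\mathbb{E}_\pi\big[\tfrac{1-\gamma^{\tau_g}}{1-\gamma}\big]$ for the hitting time $\tau_g$ of $s_{\mathrm{goal}}$; in a deterministic MDP the maximizing policy is the one minimizing $\tau_g$, i.e., the shortest-path policy, so $-\max_\pi V^\pi(s,s_{\mathrm{goal}})=\tfrac{1-\gamma^{T^*(s)}}{1-\gamma}$, which is precisely the fixed point found in the second step. Finally, \textbf{Proposition}~\ref{lem: equi} ($V^*=-d^*$) together with $V^*=\max_\pi V^\pi$ pins down the overall sign and confirms the stated chain of equalities.

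I expect the main obstacle to be the first step: making the ``expectile $\to$ essential supremum'' statement rigorous at the level of the conditional target distribution---measurability, attainment of the maximum over the successors observed in $\mathcal D$, and realizability of the optimal distance within the function class induced by $f(\cdot;\theta)$---and stating precisely the coverage assumption needed so that the in-sample backup reproduces the genuine shortest path rather than an optimistic under-estimate. Once that limit is justified, the contraction and unrolling arguments of the second and third steps are routine.
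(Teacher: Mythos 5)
Your proposal follows essentially the same route as the paper's own proof: both hinge on the property that the $\tau$-expectile of a bounded target tends to its essential supremum as $\tau\to 1$, so that the expectile-regressed Bellman backup recovers the in-sample optimal (shortest-path) recursion, whose fixed point is then identified with $-\max_\pi V^\pi(s,s_{\mathrm{goal}})$ via Proposition~\ref{lem: equi}. Your version is in fact more careful than the paper's (which settles for an informal sandwich argument), adding the $\gamma$-contraction/unique-fixed-point step, the explicit $\tfrac{1-\gamma^{T^*(s)}}{1-\gamma}$ form, and the data-coverage caveat, but the underlying argument is the same.
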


\begin{proof}
    See Appendix~\ref{app:proof}.
\end{proof}

Next, \textit{transition-level} regularizer is related to temporal coherence of consecutive state $(s, s')$. It constrains a representation space to maintain temporal consistency between single-step transitions as $d(s, s') \leq d_0$. The loss function is as follows.
\begin{equation}
    \mathcal{L}_{tran} = \mathbb{E}_{(s, s')\sim \mathcal{D}}\bigg[L_2^{\tau=1}\Big(d\big(f(s;\theta), f(s';\theta)- d_0\big)\Big)\bigg]
    \nonumber
\end{equation}
In this equation, we set $d_0$ as $|r_g(s) - 1|$, akin to a moving cost. This approach is similar to~\cite{allen2021learning}. They have observed that it encourages the smoothness of representation space and avoids overestimation about a distance of consecutive states.

\textbf{Full Objective.} We use the stochastic gradient descent~\cite{ruder2016overview} to update the parameter of autoencoder $\theta$ with the following loss function.
\begin{equation}
    \mathcal L(\theta) = \mathcal L_{rec} + \eta_1\mathcal L_{traj} + \eta_2\mathcal L_{tran}
    \label{eq:AE}
\end{equation}
In~\eqref{eq:AE}, $\eta_1$ and $\eta_2$ are balancing weights for regularizers. This work considers a deterministic autoencoder, but can be replaced with a variational one~\cite{kingma2013auto}.\footnote{The encoder and decoder can be decoupled for training.}

\subsection{Latent Dynamic Transition Model}
\label{sec:dyna}
This work aims to construct a forward dynamic model in a representation space, dubbed a latent dynamic model, not a state space. Unlike a dynamic model in a state space, the latent dynamic model can handle high-dimensional domains (\textit{e.g.}, pixel-based environments), enhancing generalization and prediction performance. 

\begin{figure}[t]
    \centering
    \includegraphics[width=0.7\columnwidth]{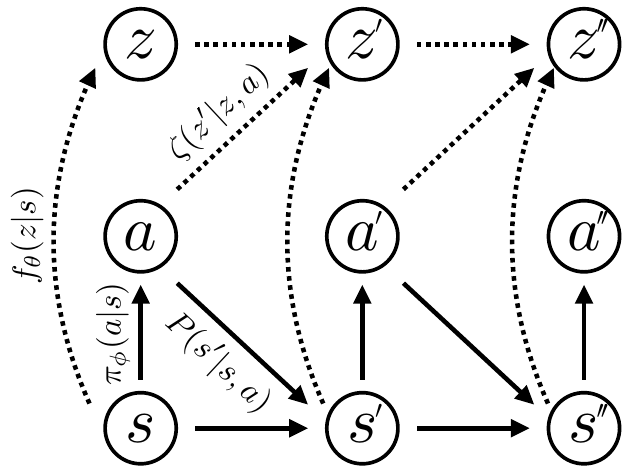}
    \caption{\textbf{Transitions in a representation space $\mathcal Z$.} Dashed lines are related to representation space.}
    \label{fig:graphical}
\end{figure}

\begin{figure*}
    \centering
    \includegraphics[width=\textwidth]{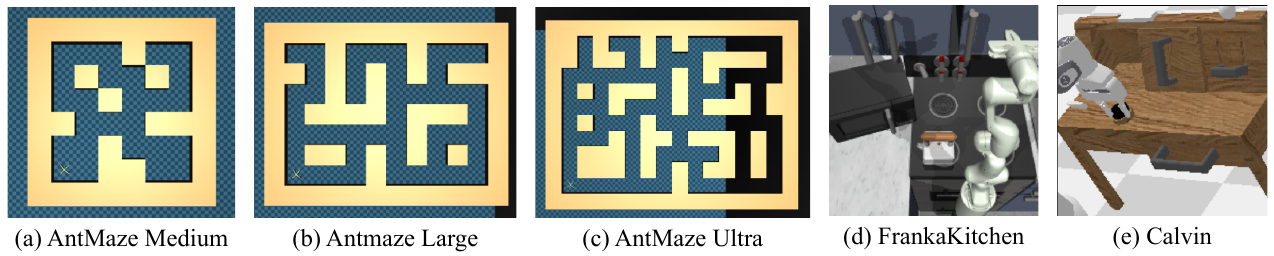}
    \caption{\textbf{Selected experimental environments.} (a-c) State-based, single goal-reaching, and long-horizon navigation. (d-e) State-based, multi-goal subtasks, and long-horizon manipulation. We reuse the Kitchen environment as a Pixel-based task.}
    \label{fig:exp}
    \vspace{-0.3cm}
\end{figure*}

More precisely, we suppose that a latent dynamic model follows a Gaussian distribution, where both the mean and variance are parameterized by a feed-forward neural network. This model predicts the one-step transition of representation $z \rightarrow z'$ by marginalizing out the state space $\mathcal S$.
\begin{align}
   \zeta(z'|z,a) = \iint ds ds' p(z'|s')p(z|s)P(s'|s,a) \nonumber
\end{align}
In this equation, we can replace a latent probability $p(z|s)$ as a learned encoder $f(s;\theta)$ in Section~\ref{sec:TempAE}. We parameterize the latent dynamic model $\zeta$ and optimize it via a negative log-likelihood estimate minimization.
\begin{align}
   \mathcal{L} (\zeta) = \mathbb{E}_{\substack{(s,a,s^\prime) \sim \mathcal D \\(z, z') \sim f(\cdot;\theta)}} \big[-\log \zeta(z'|z,a) \big]
   \label{eq:lat_dyna}
\end{align}
Importantly, the log-likelihood of a Gaussian distribution with unit variance is equivalent to the mean squared error, differing only by a constant~\cite{hafner2019learning}. Note that we do not consider training a parameterized reward model because GCRL can leverage intrinsic goal-achieving rewards, akin to reward-free setting~\cite{eysenbach2022contrastive, park2024hiql}.

\begin{algorithm}[t!]
    \caption{{TempDATA} with offline RL}
    \label{alg:summary}
    \begin{algorithmic}
    \REQUIRE offline dataset $\mathcal{D}$, rollout buffer $\widehat{\mathcal{D}}$, batch sizes $B_\theta$, $B_\zeta$, $B_\phi$, learning rates $\eta_\theta$, $\eta_\zeta$, $\eta_\phi$, rollout frequency $I$, sampling ratio $\sigma_{\widehat{\mathcal D}}$
    \Initialize network parameters $\theta, \zeta, \phi$, rollout buffer, $\widehat{\mathcal{D}}$, learning rates $\eta_\theta$, $\eta_\zeta$, $\eta_\phi$
    \STATE Label goal state: $\mathcal{D} \leftarrow \mathrm{{GoalLabeling}}(\mathcal{D})$
    \STATE {{\color{gray} \# Train autoencoder $f_\theta, h_\theta$}}
    \WHILE{not converged}
    \STATE {Sample $B_{\theta}$ transitions $(s, s', s_{\mathrm{goal}}) \sim \mathcal D$}
    \STATE {$\theta \leftarrow \theta - \eta_\theta \nabla_\theta \mathcal L({\theta})$ with $(s, s', s_{\mathrm{goal}})$} $~~~$ {{\color{gray} \# equation~\eqref{eq:AE}}}
    \ENDWHILE
    \STATE {{\color{gray} \# Train latent dynamic model $\zeta$}} 
    \WHILE{not converged}
    \STATE {Sample $B_{\zeta}$ transitions $(s, s', s_{\mathrm{goal}}) \sim \mathcal D$}
    \STATE Map state into representation $(z, z', z_{\mathrm{goal}})$ with $f_\theta$
    \STATE {$\zeta \leftarrow \zeta - \eta_\zeta \nabla_\zeta \mathcal L({\zeta})$ with $(z, a, z')$} $~~~\quad$ {{\color{gray} \# equation~\eqref{eq:lat_dyna}}}
    \ENDWHILE
    \STATE {{\color{gray} \# Extract offline policy $\pi_\phi$}}
    \WHILE{not converged}
    \IF{Iterations {\textbf{mod}} $I == 0$}
        \STATE Collect $\widehat{\mathcal{D}} \leftarrow {Rollout}(\mathcal{D}, \pi_\phi, f_\theta, h_\theta, \zeta)$ 
    \ENDIF
    \STATE {Sample $(1-\sigma_{\widehat{\mathcal D}}){B_\phi}$ from $\mathcal{D}$ and $\sigma_{\widehat{\mathcal D}}{B_\phi}$ from $\widehat{\mathcal{D}}$}
    \STATE {Calculate intrinsic reward on $B_\phi$} $~~\qquad$ {{\color{gray} \# equation~\eqref{eq: intrew}}}
    \STATE {Extract policy with any existing offline RL algorithm $\pi_\phi \leftarrow \pi_\phi - \eta_\phi \nabla_\phi \mathcal L({\phi})$}
    \ENDWHILE
    \end{algorithmic}
\end{algorithm}

\subsection{Model-based Policy Training}
\label{subsec:policy}
Model-based policy training is two-fold: 1) data augmentation with a learned dynamic model in Section~\ref{sec:dyna} and 2) policy extraction via any existing offline RL algorithm. This process runs iteratively until offline policy converges.

The first step generates representation-action tuples $(z, \widehat{a}, \widehat{z}')$ via model rollouts, where $\widehat{a}$ and $\widehat{z}'$ are getting from a policy $\pi$ and $\zeta$. Generated transition samples are decoded to $(s, \widehat{a}, \widehat{s}')$ using $h(z;\theta)$, and then decoded samples are stored in the augmented rollout dataset $\widehat{\mathcal D}$. 

Next, in our framework, policy optimization can be divided into two folds: actor-critic methods and weighted SL. Actor-critic methods include a training process of the value function, but we consider a reward-free dataset. Therefore, we define an intrinsic reward to learn goal-conditioned value function efficiently, as follows.
\begin{equation}
    \tilde{r}(s, s') = d\big(f(s';\theta), f(s_{\mathrm{goal}};\theta)\big) - d\big(f(s;\theta), f(s_{\mathrm{goal}};\theta)\big)
    \label{eq: intrew}
\end{equation}
This can be interpreted as an advantage of $s'$ compared to $s$ for goal-achieving. We can train a value function with the following loss function.
$$\mathcal{L}(\phi_Q) = \mathbb{E}_{(s,a,s')\sim \mathcal{D}}\Big[\big(\tilde{r} + \gamma Q(s', \pi(s')\big) - Q(s, a))^2\Big]$$
Consequently, the weighted SL method can extract a policy without additional value function training, as follows:
\begin{equation}
    \mathcal{L}(\phi_\pi) = \mathbb{E}_{(s,a,s')\sim\mathcal{D}}\Big[-\exp(\beta\times\tilde{r}(s, s')) \log \pi(a|s)\Big],
    \label{eq:policy}
\end{equation}
where $\beta$ is an inverse temperature~\cite{nair2020awac}, and $\tilde{r}(s, s')$ can be replaced for an advantage function of the weighted SL~\cite{kostrikov2021offline}.

\begin{table*}
    \centering
    \caption{Evaluating \textbf{TempDATA} (\textbf{Proposed}) on D4RL AntMaze environment. The best performance is highlighted in \textbf{Bold}.}
    \scriptsize
    \resizebox{\textwidth}{!}{\begin{tabular}{lccccccccc}
    \toprule
      & \multicolumn{3}{c}{TARL-based methods} & \multicolumn{6}{c}{MBRL-based methods}\\
      \cmidrule(r){2-4} \cmidrule(r){5-10} 
     \textbf{AntMaze Dataset} & \textbf{S4RL}$^\dagger$ & \textbf{SynthER}$^\dagger$ & \textbf{GTA}$^\dagger$ & \textbf{MOPO}$^\dagger$ & \textbf{RepB-SDE}$^\dagger$ & \textbf{COMBO}$^\dagger$ & \textbf{RAMBO}$^\dagger$ & \textbf{ROMI}$^\dagger$ & \textbf{Proposed}\\
     \midrule
     {umaze} &$55.00$\tiny$\pm 21.0$ &$17.1$\tiny$\pm 12.9$ &$66.5$\tiny$\pm 13.8$ & $0.0$ & $0.0$ & $80.3$\tiny$\pm 18.5$ & $25.0$\tiny$\pm12.0$ & $68.7$\tiny$\pm2.7$ & {$\mathbf{96.3}$\tiny$\pm0.0$}\\
     {umaze-diverse} &$51.6$\tiny$\pm 23.4$ &$23.9$\tiny$\pm 23.6$ &$57.9$\tiny$\pm 19.0$ & $0.0$ & $0.0$ & $57.3$\tiny$\pm 33.6$ & $0.0$ & $61.2$\tiny$\pm3.3$ &{$\mathbf{90.4}$\tiny$\pm8.6$} \\
     {medium-play} &$80.9$\tiny$\pm 10.4$ &$41.0$\tiny$\pm 41.2$ & $\mathbf{81.9}$\tiny$\pm 8.4$ & $0.0$ & $0.0$ & $0.0$ & $16.4$\tiny$\pm17.9$ & $35.3$\tiny$\pm1.3$ & $74.8$\tiny$\pm8.3$ \\
     {medium-diverse} &$74.0$\tiny$\pm 19.4$ & $40.1$\tiny$\pm 28.4$ & $\mathbf{78.1}$\tiny$\pm 15.8$ & $0.0$ & $0.0$ & $0.0$ & $23.2$\tiny$\pm14.2$ & $27.3$\tiny$\pm3.9$ & $69.5$\tiny$\pm10.8$ \\
     {large-play} &$42.9$\tiny$\pm 17.4$ &$37.5$\tiny$\pm 13.0$ & $44.4$\tiny$\pm 9.3$ & $0.0$ & $0.0$ & $0.0$ & $0.0$ & $20.2$\tiny$\pm14.8$ & $\mathbf{56.5}$\tiny$\pm14.1$ \\
     {large-diverse} &$46.1$\tiny$\pm 16.7$ &$37.5$\tiny$\pm 16.7$ & $47.8$\tiny$\pm 13.4$ & $0.0$ & $0.0$ & $0.0$ & $2.4$\tiny$\pm3.3$ & $41.2$\tiny$\pm4.2$ &$\mathbf{44.2}$\tiny$\pm15.3$ \\
     {ultra-play} & $-$ & $-$ & $-$ & $-$ & $-$ & $0.0$\tiny$\pm0.0$ & $0.0$\tiny$\pm0.0$ & $4.9$\tiny$\pm2.1$ & $\mathbf{53.2}$\tiny$\pm18.2$ \\
     {ultra-diverse} & $-$ & $-$ & $-$ & $-$ & $-$ & $0.0$\tiny$\pm0.0$ & $0.0$\tiny$\pm0.0$ & $8.8$\tiny$\pm6.8$ &$\mathbf{35.3}$\tiny$\pm10.9$ \\
    \midrule
    \textbf{Total score} w/o {ultra} & $350.5$ & $236.2$ & $376.5$ & $0.0$ & $0.0$ & $137.6$ & $88.6$ & $253.9$ & $\textbf{431.7}$\\
    \textbf{Total score} & $-$ & $-$ & $-$ & $-$ & $-$ & $137.6$ & $88.6$ & $272.6$ & $\textbf{520.2}$ \\
    \bottomrule
    \end{tabular}}
    \label{tab:goal}
\end{table*}

\begin{figure*}
    \vspace{-0.1cm}
    \centering
    \includegraphics[width=\textwidth]{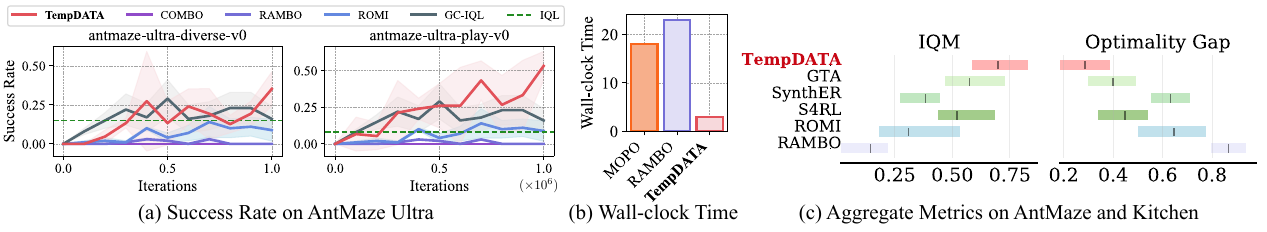}
    \caption{\textbf{Further investigation on performance.} (a) Learning curve about success rate on AntMaze Ultra task.  (b) Wall-clock time for each MBRL implementation. (c) RLiable plots~\cite{agarwal2021deep} for D4RL benchmark.}
    \label{fig:state_exp}
    \vspace{-0.3cm}
\end{figure*}

\subsection{Algorithm Summary}
Algorithm~\ref{alg:summary} presents the pseudocode of {\textbf{TempDATA}} comprising three phases. Before beginning the network training, we label the goal information across each transition sample of the dataset using {GoalLabeling} procedure. We pre-train an autoencoder $\theta$ with an equation~\eqref{eq:AE}, and then training latent dynamic model $\zeta$ with an equation~\eqref{eq:lat_dyna} and the encoder $f_\theta$. Once the pre-training autoencoder and latent dynamic model are completed, we extract an offline policy based on the offline MBRL solution. We collect rollout data $\widehat{\mathcal D}$ using {Rollout} procedure at each preset rollout frequency $I$. Subsequently, we sample data from two datasets with preset ratios $\sigma_{\widehat{\mathcal D}}$, and extract a policy $\pi_\phi$ using off-the-shelf offline RL algorithms. Each phase is reiterated until it converges or during pre-defined iterations. We provide full training details in Appendix~\ref{app:training}.

\begin{remarkbox}[\resizebox{\columnwidth}{!}{\textbf{Remark}: Modularity of Temporal Representation}]
    Our autoencoder is fully compatible with any off-the-shelf model-free or model-based RL algorithm. Moreover, its latent dynamics support a wide array of downstream objectives, \textit{including} RL, skill RL, and GCRL, without requiring any modifications to the core encoding or planning modules.
\end{remarkbox}

\section{Experiments} \label{sec: exp}
In our experiments, we evaluate TempDATA's performance on diverse downstream tasks. In particular, the main performance comparison is performed on three goal-achieving tasks. We also assess the generalizability of TempDATA on pixel-based and dense reward tasks. 

\textbf{Demonstration Tasks.} We initially outline the environments used for our evaluation, visualized in \textit{Figure}~\ref{fig:exp}. \textbf{{AntMaze}} is a widely-used benchmark environment~\cite{brockman2016openai}, where an 8-DoF Ant robot navigates to reach a given goal state from the initial one. We consider four different levels of this environment (\textit{i.e.}, {Umaze}, {Medium}, {Large}, and {Ultra}) and its dataset from D4RL benchmark~\cite{fu2020d4rl}. \textbf{{Kitchen}} is a realistic long-horizon benchmark environment~\cite{gupta2019relay}, where a 9-DoF Franka robot manipulates four different sub-tasks (\textit{i.e.}, open a drawer, move a kettle, etc.). We also use the D4RL benchmark dataset, and we consider two classes ({`partial'} and {`mixed'} without a complete dataset). \textbf{{CALVIN}} another environment designed for long-horizon manipulation tasks~\cite{mees2022calvin}, includes four target subtasks (\textit{i.e.}, push a button, pull the lever, etc.) akin to {FrankaKitchen}. The main difference is the dataset: {CALVIN} provides significantly larger, encompassing task-agnostic trajectories from $34$ distinct subtasks~\cite{shi2022skill, park2024hiql}. \textbf{{Visual Kitchen}} is a pixel-based version of {Kitchen} task. 

For dense reward tasks, we adopt four dataset from D4RL benchmark, \textit{including} halfcheetah and walker2D tasks ($\{$``medium-replay" and ``medium-expert"$\}$).

\begin{table*}
    \centering
    \caption{Evaluating \textbf{TempDATA} on multi-goal tasks, {FrankaKitchen} and {CALVIN}.}
    \label{tab:multigoal}
    \small
    \begin{tabular}{lcccccc}
    \toprule
     \textbf{Dataset} & \textbf{GCSL}$^\dagger$ & \textbf{GC-CQL}$^\dagger$ & \textbf{GC-IQL}$^\dagger$ & \textbf{GC-POR}$^\dagger$ & \textbf{HIQL}$^\dagger$ & \textbf{Proposed}\\
     \midrule
     {kitchen-mixed} & $46.7$\tiny$\pm20.1$ & $15.7$\tiny$\pm17.6$ & $51.3$\tiny$\pm12.8$ & $27.9$\tiny$\pm17.9$ & $\mathbf{67.7}$\tiny$\pm6.8$ & {${65.3}$\tiny$\pm11.7$}\\
     {kitchen-partial} & $38.5$\tiny$\pm11.8$ & $31.2$\tiny$\pm16.6$ & $39.2$\tiny$\pm13.5$ & $18.4$\tiny$\pm14.3$ & $65.0$\tiny$\pm9.2$ & {$\mathbf{70.0}$\tiny$\pm12.8$} \\
     {CALVIN} & $17.3$\tiny$\pm14.8$ & $5.9$\tiny$\pm12.3$ & $7.8$\tiny$\pm17.6$ &$12.4$\tiny$\pm18.6$ & $43.8$\tiny$\pm39.5$ & $\mathbf{50.4}$\tiny$\pm 34.6$\\
    \midrule
    \textbf{Total score} & $102.5$ & $52.8$ & $98.3$ & $58.7$ & $176.5$ & $\mathbf{185.7}$ \\
    \bottomrule
    \end{tabular}
    \vspace{-0.35cm}
\end{table*}

\textbf{Baseline Algorithms.} We compare the performance of the proposed solution with five offline MBRLs, five offline GCRLs, and three trajectory augmentation-based RL (TARL) methods. For \textbf{offline MBRL} methods, we consider model-based policy optimization (MOPO)~\cite{yu2020mopo}, conservative MOPO (COMBO)~\cite{yu2021combo} that combines offline MBRL with conservative $Q$ regularizer, robust adversarial MOPO (RAMBO)~\cite{rigter2022rambo} that trains an adversarial dynamic
model against an offline policy, representation balancing with stationary distribution estimation (RepB-SDE)~\cite{lee2021representation}, and reverse offline model-based imagination (ROMI)~\cite{wang2021offline} that trains a reverse dynamic model instead of a forward dynamic model. Next, for \textbf{offline GCRL} methods, we use a flat or goal-conditioned variant of SL (GCSL)~\cite{ghosh2019learning}, CQL (GC-CQL)~\cite{kumar2020conservative}, IQL (or GC-IQL)~\cite{kostrikov2021offline}, policy-guided offline RL (GC-POR)~\cite{xu2022policy} that includes a hierarchy policy structure, and hierarchical IQL (HIQL)~\cite{park2024hiql} that is a hierarchical version of GC-IQL. 
Finally, for \textbf{offline TARL} methods, we use S4RL~\cite{sinha2022s4rl}, SynthER~\cite{lu2024synthetic}, and GTA~\cite{lee2024gta}, which are based on the diffusion model.

In the following subsections, our experiments are based on $8$ random seeds with two standard deviations (in \textit{Table}) or confidence intervals (in \textit{Figure}). The recorded scores are normalized scores of average success rate on $50$ test trials. The performance marked $\dagger$ implies reported benchmark scores by~\citeauthor{wang2021offline, rigter2022rambo, park2024hiql, lee2024gta}. 

\subsection{Performance Comparison with MBRL and TARL}
This subsection compares the performance of TempDATA with our baseline methods on D4RL AntMaze. 

As shown in \textit{Table}~\ref{tab:goal} (\textit{Left}), TempDATA significantly outperforms the existing offline MBRL methods, especially achieving the best score of $8$ out of $8$ tasks. TempDATA records a total score difference of approximately $200$ compared to the second-best method, ROMI. What is surprising is that TempDATA achieves this even though it does not consider the ensemble structure of the dynamic model, unlike prior MARL methods, and thus can drastically reduce the training time (\textit{Figure}~\ref{fig:state_exp}(b)). We conjecture this performance improvement comes from model-based additional data generated within the temporal distance-aware representation space. Next, \textit{Figure}~\ref{fig:state_exp}(a) confirms that TempDATA has comparable or superior performance even when compared with MFRL methods on an extreme-level task. Finally, \textit{Table}~\ref{tab:goal} and \textit{Figure}~\ref{fig:state_exp}(c) also demonstrates that the proposed solution mostly achieves the best performance compared to TARL-based methods on D4RL benchmark datasets (AntMaze and Kitchen). Notably, TempDATA attains a substantial performance enhancement of $90.79\%$ over MBRL methods and of $14.66\%$ over TARL ones.

\subsection{Performance Comparison with GCRL}
Next, this subsection compares the performance of TempDATA with GCRL baseline methods on multi-goal tasks from {FrankaKitchen} and {Calvin}. As summarized in \textit{Table}~\ref{tab:multigoal}, TempDATA achieves the highest overall score, outperforming GCRL baseline methods. While the prior model-based approaches did not explicitly target these environments, TempDATA demonstrates that a carefully designed model-based algorithm can rival and even surpass model-free RL techniques on multi-goal tasks.

Notably, TempDATA’s advantage is consistent across both the {kitchen-mixed} and {kitchen-partial} settings, where it nearly closes the performance gap with HIQL in {kitchen-mixed} and substantially exceeds it on {kitchen-partial}. Moreover, on the challenging {calvin} dataset, TempDATA again records a higher average return compared to all GCRL baselines. These results highlight the flexibility of TempDATA’s temporal distance-aware representation, which enables the model to adapt effectively to complex goal-conditioned environments that have not been thoroughly addressed by previous model-based RL approaches.

\subsection{Performance on Pixel-based Tasks}
\begin{table}[h]
    \vspace{-0.5cm}
    \centering
    \caption{Evaluating \textbf{TempDATA} on pixel-based task, {Visual FrankaKitchen}, using two different datasets.}
    \label{tab:pixelbased}
    \footnotesize
    \resizebox{\columnwidth}{!}{\begin{tabular}{lccc}
    \toprule
     \textbf{Visual dataset} & \textbf{GC-IQL}$^\dagger$ & \textbf{RepB-SDE} & \textbf{Proposed}\\
     \midrule
     {kitchen-mixed} & $52.9$\tiny$\pm4.7$ & $0.00$\tiny$\pm0.0$ & {$\mathbf{58.1}$\tiny$\pm5.9$}\\
     {kitchen-partial} & $\mathbf{63.6}$\tiny$\pm4.2$ & $0.00$\tiny$\pm0.0$ & {${56.5}$\tiny$\pm3.5$} \\
     \midrule
     \textbf{Total score} & $\mathbf{116.5}$ & $0.00$ & $114.6$ \\
    \bottomrule
    \end{tabular}}
    \vspace{-0.3cm}
\end{table}
Lastly, we next evaluate the effectiveness of TempDATA in pixel-based tasks, comparing it against the GCRL baseline (GC-IQL) and an MBRL baseline (RepB-SDE). As shown in \textit{Table}~\ref{tab:pixelbased}, TempDATA achieves performance comparable to the GC-IQL, while RepB-SDE struggles entirely in these pixel-based settings. Surprisingly, this marks a significant improvement for offline MBRL on pixel-based tasks, demonstrating that our proposed solution can successfully overcome the challenges posed by high-dimensional inputs and pave the way for future advances in visual control.

\subsection{Generalizability on Dense Reward Tasks}
\begin{figure}[h]
    \vspace{-0.3cm}
    \centering
    \includegraphics[width=\linewidth]{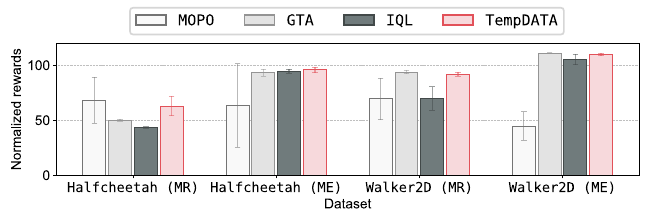}
    \caption{\textbf{Performance comparison on dense reward tasks of D4RL.} We evaluate the TempDATA on dense reward tasks, \textit{i.e.}, Halfcheetah and Walker2D. In $x$-axes of this plot, MR and ME denote medium-replay and medium-expert datasets.}
    \label{fig: den}
    \vspace{-0.2cm}
\end{figure}
While our primary focus is on long-horizon, sparse-reward problems, we also evaluated {TempDATA} on dense-reward benchmarks from D4RL to demonstrate its broader applicability. In \textit{Figure}~\ref{fig: den}, {TempDATA} matches or outperforms {MOPO}, {GTA}, and {IQL} across HalfCheetah (MR/ME) and Walker2D (MR/ME), despite not being tailored to dense rewards. These results show that {TempDATA} scales effectively to classic continuous-control tasks. Comprehensively, these findings validate that the performance gains arise directly from our augmentation framework and underscore the generalizability of {TempDATA} across diverse RL tasks.

\subsection{Ablation Study}
\begin{figure}[h]
   \vspace{-0.4cm}
    \centering
    \includegraphics[width=\linewidth]{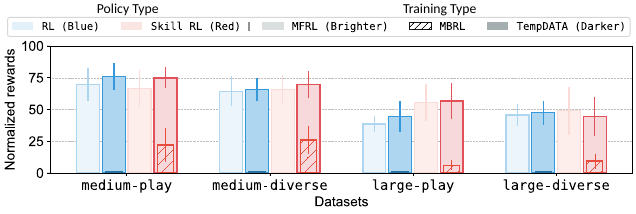}
    \caption{\textbf{Ablation study by policy and training type on four dataset variants (medium-play, medium-diverse, large-play, large-diverse).} Bright bars (blue: vanilla RL; red: Skill-based RL) show baselines, while darker bars add {TempDATA} augmentation. Boxed markers underneath indicate naive model‐based rollouts. }
    \label{fig: abla}
    \vspace{-0.2cm}
\end{figure}
\textit{Figure}~\ref{fig: abla} presents an ablation study comparing four algorithmic variants on D4RL datasets. The naive model-based extensions consistently reduce average returns, whereas {TempDATA} maintains or boosts performance relative to each baseline. In particular, MFRL with naive rollouts suffers a clear drop, but the proposed solution elevates results to match or exceed pure MFRL. Similarly, skill RL alone boosts performance, yet naive rollouts nullify this gain, while the integrated augmentation preserves and enhances the skill RL benefits. This ablation underscores that {TempDATA} performs better than naive MBRL solutions regardless of policy types.

\subsection{Scalability on Arbitrary Goal}
\begin{figure}[h]
    \vspace{-0.3cm}
    \centering
    \includegraphics[width=\linewidth]{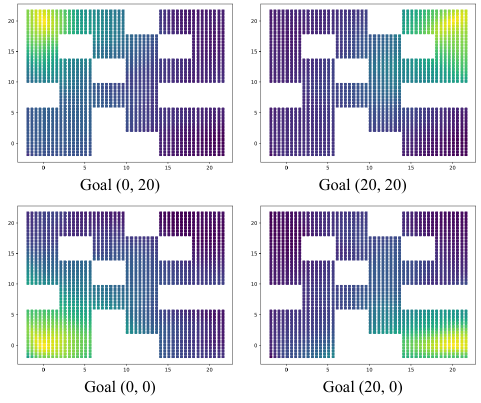}
    \caption{\textbf{Heatmap according to four different goals in learned latent space.} This visualization shows the distance between every quantized state and goal positions in the learned latent space. Brighter and darker colors imply lower and higher costs.}
    \label{fig: value}
\end{figure}
\textit{Figure}~\ref{fig: value} presents heatmap visualizations of learned latent distances between every quantized state and four distinct goal positions at $(0,20), (20,20), (0,0),$ and $(20,0)$. In each visualization, the heatmap of the representation distance around obstacles rather than following straight-line Euclidean paths, reflecting the true temporal cost of navigation. Notably, the shape and spacing of these contours closely match the shortest-path travel times to each goal, indicating that the encoder has internalized environment dynamics. This consistency across all four goal tasks shows that the autoencoder captures a universal temporal metric, not merely goal-specific shortcuts. Consequently, given an arbitrary goal at a test time, {TempDATA} can approximate the true effort, as a temporal distance, required to reach that goal. These quantitative heatmaps confirm that {TempDATA} generalizes robustly to unseen goal locations by aligning learned distances with obstacle-aware travel times. Together with our ablation studies, this evidence demonstrates that a single learned representation suffices to plan efficient paths for varied goal-reaching scenarios.

\section{Conclusion}
This work introduces a novel offline MBRL method, TempDATA, which augments new transitions in a latent space instead of a raw state space. This solution builds a representation space that should capture the temporal distance between microscopic and macroscopic levels. Next, we train a latent dynamic model to generate new transitions, thereby leveraging off-the-shelf offline RL algorithms. The proposed solution not only outperforms the offline MBRL approaches in challenging goal-reaching benchmarks but also competes favorably with the GCRL or TARL approaches.

\textbf{Closing Statements.} The combination of model-based RL and temporal encoding in latent space distills a fixed dataset into a compact latent space that encodes both the dynamics and the geometry needed for long-horizon planning and robust policy extraction. Although we found such efficiency and robustness, there are still questionable points. 
\begin{itemize}
    \item How well do these model-based rollouts venture beyond the empirical support? 
    \item What if observations are partial, noisy, or stochastic, and global distance preservation shatters?
    \item How can an ego-agent embed worlds where other agents, not just itself, drive state transitions?
    \item Why restrict ourselves to symmetric metrics when many tasks are temporally asymmetric?
    \item Should the offline pre-training remain strictly offline?
\end{itemize}
Answering these questions will chart the course for truly generalizable, distance-aware control: by blending robust representation learning with principled uncertainty modeling, multi-agent reasoning, asymmetric geometry, and hybrid online–offline adaptation, we can move beyond today’s safe but limited rollouts toward methods that excel in real-world, long-horizon decision making.

\newpage
\section*{Impact Statement}
This paper presents TempDATA, a general method for synthesizing temporal-distance-aware transitions that guide goal-reaching policies without extra environment interactions. By generating meaningful data in latent space, our approach accelerates learning of long-horizon behaviors and reduces reliance on costly real-world trials. TempDATA’s framework can broadly enhance sample efficiency and robustness in offline RL applications. 

\section*{Acknowledgement} We thank the ICML 2025 reviewers for their constructive feedback. Some parts of the experiments are based on Prof. Younghan Kim (DCNLAB)'s computing resources. This work was supported by by Institute of Information \& communications Technology Planning \& Evaluation (IITP) grant funded by the Korea government (MSIT) (RS-2022-00143911, AI Excellence Global Innovative Leader Education Program), NRF grant (RS-2023-00278812), and in part by the IITP grants (No. 2021-0-00739, IITP-2025-RS-2020-II201602) funded by the Korea government (MSIT).

\bibliography{example_paper}
\bibliographystyle{icml2025}

\newpage
\appendix
\onecolumn

\section{Training Details}
\label{app:training}

\textbf{Convolutional encoder for the pixel-based task.} For pixel-based environments, we handle image input using the IMPALA CNN architecture~\cite{espeholt2018impala}. The TempDATA auto-encoder uses the $512$-dimensional output features of Impala CNN. In addition to the auto-encoder, our dynamic model and RL policy also use such compressed image features of IMPALA CNN. Therefore, we do not need to build a raw image decoder model.

\textbf{Goal-conditioned Reinforcement Learning.}
We train a skill-conditioned policy in an unsupervised manner to enable efficient learning in goal-achieving environmental settings. Moreover, this solution can be applied to multiple goal scenarios instead of fixed goal scenarios. We introduce a latent skill variable $w$, which is randomly sampled from $p(w)$ inspired by {DIYAN}, and learn a policy $\pi(a|s,w)$ to explore and capture diverse behaviors from a reward-free offline dataset. This unsupervised phase leverages a pre-trained encoder to compute rewards and measure skill progress in the latent space. Specifically, as we discussed in Section~\ref{subsec:policy}, we define an intrinsic reward, $    \tilde{r}(s, s') = d\big(f(s';\theta), f(s_{\mathrm{goal}};\theta)\big) - d\big(f(s;\theta), f(s_{\mathrm{goal}};\theta)\big).$ Next, we implement our offline RL training using the IQL combined with AWR objective function, ensuring stable policy improvement in an in-sample learning manner. Our overall pipeline is integrated within the TempDATA codebase, which extends {HIQL} implementations~\cite{park2024hiql}. However, we do not consider a hierarchical policy structure in this work. 

\textbf{Policy execution.} At the test phase, the trained policy requires a specific skill variable according to the given goal state. To do this, we adopt the test-time skill adaptation, which is proposed in {HILP}~\cite{park2024foundation}, as follows:
\[
w^* \;=\; \frac{f(s_{\mathrm{goal}};\theta) - f(s;\theta)}{\|f(s_{\mathrm{goal}};\theta) - f(s;\theta)\|},
\]  
which guides the policy to decrease the distance to $f(s_{\mathrm{goal}};\theta)$ by moving the latent direction $f(s_{\mathrm{goal}};\theta) - f(s;\theta) $. This approach requires no additional training and can be performed in a zero-shot manner.

\textbf{Procedures: Goal labeling.} To train autoencoder and offline RL networks, we relabel goal distributions using hindsight experience relabeling~\cite{andrychowicz2017hindsight}. Specifically, for a randomly selected state, we assign it as the goal with a $20\%$ probability, a future state from the trajectory with a $50\%$ probability, and a completely random state with a $30\%$ probability. The reward is set to $-1$ for every timestep until the goal is achieved, at which point the done mask is set to True. During the encoding process, we ensure that at least one of the samples contains the goal state.

\textbf{Procedures: Model rollout.} We adopt a standard model rollout approach~\cite{yu2020mopo} in which a fully trained dynamics model is used to simulate additional trajectories during policy learning. Specifically, the agent performs rollouts of length \(k\) steps in the learned model to generate synthetic state-action-reward transitions that complement the original offline dataset. To ensure a stable and well-grounded initialization, we train the policy for the first $30\%$ of the total training duration without any rollouts. This phase provides the agent with a direct understanding of the environment’s dynamics before synthetic samples are introduced.  Subsequently, at every $10\%$ iteration of the remaining training period, we generate synthetic samples equivalent to half the current size of the offline buffer and add them to the training pool. By gradually incorporating model-generated data, we allow the agent to refine its decision-making while mitigating potential inaccuracies from extensive model rollouts.

\newpage
\section{Proof of Theorem~\ref{theorem:tau} }
\label{app:proof}

\begin{proof}
    We use a Bellman operator with expectile regression to learn \(d_{\theta}(s, s_{\mathrm{goal}})\).  Concretely, each update enforces
   \[
   d\bigl(f(s;\theta), f(s_{\mathrm{goal}};\theta)\bigr)
   \;\approx\; 
   1 \;+\; \gamma\,\min_{a}\,d\bigl(f(s';\theta), f(s_{\mathrm{goal}};\theta)\bigr),
   \]
   where \(\gamma\) is the discount factor, and \((s,a,s')\) are transitions sampled from the dataset or replay buffer. The loss itself is formulated via expectile regression, which replaces the usual squared Bellman error with a \(\tau\)-expectile operator.

    A property of expectile regression states that if \(X\) is a bounded random variable with supremum \(x^*\), then
   \[
   \lim_{\tau \to 1} m_{\tau}(X)
   \;=\; x^*,
   \]
   where \(m_{\tau}(X)\) is the \(\tau\)-expectile of \(X\).  Translating this to our Bellman setting, driving \(\tau\) toward \(1\) forces the learned distance \(d_{\tau}\) to match the supremum of returns along feasible trajectories.
 
   In a deterministic environment, the total cost from \(s\) to \(s_{\mathrm{goal}}\) along any path is at least as large as the \emph{shortest-path} cost.  From standard arguments, we have:
   \[
   d_{\theta}(s, s_{\mathrm{goal}}) 
   \;\le\; 
   \text{(shortest-path cost from }s\text{ to }s_{\mathrm{goal}}\bigr)
   \;=\; -\,V^*(s, s_{\mathrm{goal}}).
   \]
   Meanwhile, the Bellman-like constraints in our training also ensure that \(d_{\theta}\) cannot collapse below the maximum relevant cost, since the expectile objective pushes it toward the worst-case outcome as \(\tau \to 1\).
 
   Because \(d_{\theta}(s, s_{\mathrm{goal}})\) is bounded both above and below by the same shortest-path or \(-V^*\) cost in the limit, it converges exactly to
   \[
   d_{\theta}(s, s_{\mathrm{goal}})
   \;=\; -V^*(s,\,s_{\mathrm{goal}}).
   \]
   Equivalently,
   \[
   \lim_{\tau \to 1}
   d_{\tau}\!\bigl(f(s;\theta), f(s_{\mathrm{goal}};\theta)\bigr) 
   \;=\;
   -\max_{\pi}\,V^\pi(s, s_{\mathrm{goal}}),
   \]
   completing the proof.
\end{proof}


\newpage
\section{Implementation Details}
\label{app:impl}

Our implementation for TempDATA is based on JaxRL and is available at the following repository. We run our experiments on RTX 3090 GPUs. Each experiment runs for no more than $3$ hours in state-based environments and no more than $12$ hours in pixel-based environments.

\subsection{Environments}
\textbf{AntMaze.} We examine eight distinct scenarios within the AntMaze task: `antmaze-umaze-v2', `antmaze-umaze-diverse-v2', `antmaze-medium-{diverse, play}-v2', `antmaze-large-{diverse, play}-v2', and `antmaze-ultra-{diverse, play}-v0'. The datasets for the `umaze', `medium', and `large' scale environments originate from the D4RL benchmark~\cite{fu2020d4rl}, while the datasets for AntMaze-ultra are sourced from a separate work. Notably, the AntMaze-ultra environment~\cite{jiang2022efficient} is twice as large as AntMaze-large. Each dataset comprises $999$ trajectories, each with a length of $1000$ steps, where the Ant agent moves from a randomly chosen starting position to a goal location, which is not necessarily the evaluation target. During testing, to define a goal \( g \) for the policy, we modify the first two state dimensions—corresponding to x-y coordinates—to the designated target position in the environment, while the remaining proprioceptive state variables are set to those from the first observation in the dataset. The agent receives a reward of $1$ upon successfully reaching the goal during evaluation.

\textbf{Kitchen.} The Kitchen datasets from D4RL~\cite{fu2020d4rl} consists of two datasets as `kitchen-\{partial, mixed\}-v0'. These datasets capture trajectories of a robotic arm interacting with various objects in different sequences within its environment. For this task, the agent gets a reward of $1$ upon completing each subtask, with each episode comprising four subtasks. In addition to state-based work, for pixel-based Kitchen experiments, we transform each state into a $64 \times 64 \times 3$ camera image through rendering, employing the same camera configuration used by~\citeauthor{mendonca2021discovering} and~\citeauthor{park2024foundation}.

\textbf{CALVIN.} The CALVIN offline datasets are rooted in the teleoperated demonstrations by~\citeauthor{mees2022calvin} and, thereby being introduced by~\citeauthor{shi2022skill} and~\citeauthor{park2024hiql}. This dataset comprises $499$ trajectories; each trajectory has $1204$ transitions, capturing various subtasks performed in an arbitrary order. Similar to the FrankaKitchen task, the CALVIN task consists of four subtasks and the agent obtains a reward of $1$ upon completing each subtask.

\subsection{Hyperparameters}
\label{ap:hyperset}

\begin{table}[h]
\centering
\begin{tabular}{lc}
\toprule
Hyperparameter  & Value \\
\midrule                        
Iterations & $10^6$ (state-based), $5\times10^5$ (pixel-based)\\
Learning rate & $3 \times10^{-4}$ (all networks)\\ 
Optimizer & Adam~\cite{diederik2014adam}\\
Batch size & $512$ (AntMaze), $256$ (FrankaKitchen), $128$ (CALVIN)\\
The number of evaluation episodes & $50$ (all tasks)\\
\midrule
\multirow{2}{*}{Dimensions for autoencoder network} & $[512, 512, 512, \{32, 10\}, 512, 512, 512]$\\
& $32$ (AntMaze), $10$ (CALVIN, FrankaKitchen)\\
Discount factor for autoencoder & $0.99$ (all tasks) \\
Expectile coefficient for Autoencoder & 
$0.95$ (AntMaze), $0.97$ (CALVIN, FrankaKitchen), $0.7$ (Pixel-based) \\
\midrule
Dimensions for dynamic model network & $[512, 512, 512]$ \\
The number of rollout steps & $3$ \\
\midrule
Dimensions for critic network & $[512, 512, 512]$ \\
Dimensions for actor network & $[512, 512, 512]$ \\
Target smoothing coefficient & $5\times10^{-3}$\\
Discount factor for offline RL & $0.99$\\
Inverse temperature for offline RL & $10$ (AntMaze), $3$ (CALVIN, FrankaKitchen)\\
Expectile coefficient for offline RL & $0.9$ (state-based), $0.7$ (pixel-based)\\
\bottomrule
\end{tabular}
\end{table}

\end{document}